\documentclass[10pt]{article} 
\usepackage[accepted]{tmlr}

\usepackage{microtype}
\usepackage{graphicx}
\usepackage{subfigure}
\usepackage{multirow}
\usepackage{booktabs} 
\usepackage{enumitem} 
\usepackage{algorithm}
\usepackage{algpseudocode}
\usepackage{threeparttable}

\usepackage{hyperref}

\usepackage{amsmath}
\usepackage{amssymb}
\usepackage{mathtools}
\usepackage{amsthm}

\usepackage[capitalize,noabbrev]{cleveref}

\theoremstyle{plain}
\newtheorem{theorem}{Theorem}[section]
\newtheorem{proposition}[theorem]{Proposition}

\theoremstyle{definition}

\theoremstyle{remark}

\usepackage[textsize=tiny]{todonotes}

\title{From Weighting to Modeling: \\ A Nonparametric Estimator for Off-Policy Evaluation}

\author{%
\name Rong J.B. Zhu \email rongzhu@fudan.edu.cn \\
\addr Institute of Science and Technology for Brain-inspired Intelligence, 
Fudan University
}

\def\month{3}  
\def\year{2026} 
\def\openreview{\url{https://openreview.net/forum?id=RW6PY0AU3w}} 

\begin{document}

\maketitle

\begin{abstract}
We study off-policy evaluation in the setting of contextual bandits, where we aim to evaluate a new policy using historical data that consists of contexts, actions and received rewards. This historical data typically does not faithfully represent action distribution of  the new policy accurately. 
A common approach, inverse probability weighting (IPW), adjusts for these discrepancies in action distributions. 
However, this method often suffers from high variance due to the probability being in the denominator.
The doubly robust (DR) estimator reduces variance through modeling reward but does not directly address variance from IPW. 
In this work, we address the limitation of IPW by proposing a Nonparametric Weighting (NW) approach that constructs weights using a nonparametric model. 
Our NW approach achieves low bias like IPW but typically exhibits significantly lower variance.
To further reduce variance, we incorporate reward predictions -- similar to the DR technique -- resulting in the Model-assisted Nonparametric Weighting (MNW) approach. 
The MNW approach yields accurate value estimates by explicitly modeling and mitigating bias from reward modeling, without aiming to guarantee the standard doubly robust property. 
Extensive empirical comparisons show that our approaches consistently outperform existing techniques, achieving lower variance in value estimation while maintaining low bias. 
\end{abstract}

\section{Introduction}
\label{sec:intro}

We study the off-policy value evaluation problem for decision making in environments 
where feedback is observed only for the actions that are taken.  
In this setting, the goal is to estimate the value of a target policy using data collected under an actual action-generating (behavior) policy \citep{Langford-Zhang:2008,Strehl:10}. 
This problem is central to many real-world applications of reinforcement learning, 
especially in situations where directly deploying and evaluating a new policy is impractical due to high costs, potential risks, or ethical and legal concerns \citep{Li:11}. 
For example, in health care, data are typically collected under a treatment assignment policy that records outcomes only for the treatments actually administered, with no counterfactual information for treatments not received. The objective is then to evaluate the value of a new treatment policy using such observational data.

The first approach address off-policy learning in contextual bandits is inverse probability weighting (IPW) \citep{Horvitz-Thompson:1952}, uses importance weights to correct for action imbalances in the logged data, but often incurs high variance, especially when the data logging policy has a low probability of choosing some actions. 
The second approach, the Direct Method (DM), which estimates the reward function from data and substitutes it as a substitute for the true reward to evaluate policy value across contexts.
DM heavily relies on correct model specification and can suffer high bias when the reward model is misspecified—an issue common in practice.
The third approach, doubly robust (DR) estimator \citep{Cassel:1976, Robins-Rotnitzky:1995,Lunceford-Davidian:2004,Kang-Schafer:2007}, combines DM and IPW and retains unbiasedness if either component is correctly specified. Although DR reduces variance through reward modeling, it does not directly address the variance introduced by the weighting mechanism itself.

In this paper, we address the high variance of the IPW approach by introducing the nonparametric weighting (NW) method, which uses a nonparametric model to link target policy-weighted rewards to behavior policy probabilities, thereby reducing the instability inherent in IPW. By employing $P$-splines to estimate this flexible function, the NW approach constructs weights that substantially reduce variance while maintaining low bias, similar to IPW. We establish convergence rates for both the bias and mean squared error of the NW estimator.

To further reduce variance, we extend our framework by incorporating reward predictions, akin to the DR technique, resulting in the Model-assisted Nonparametric Weighting (MNW) approach. 
We show that the MNW estimator corrects bias from the reward model and provide convergence rates for its bias and mean squared error. 
When applied to policy evaluation, both the NW and MNW estimators consistently outperform existing methods, achieving lower variance without sacrificing bias.

\section{Off-policy Evaluation}

Let $\mathcal{X}$ be an input space, $\mathcal{A}$ a finite action space with $K=|\mathcal{A}|$, and $\mathcal{R}$ a reward space. 
Consider a policy evaluation problem in contextual bandits that is specified by a distribution $\mathcal{D}$ over pairs $(x,r)$, where $x\in \mathcal{X}$ is the context and $r\in\mathcal{R}^{K}$ is a vector of rewards for all actions. 
A dataset of size $n$ is generated as follows: 
the environment draws a sample $(x_i,r_i)$, 
and the policy chooses an action $a\in\mathcal{A}$ from an unknown distribution $p$. 
The reward $r_{ia}$ corresponding to the context-action pair $(x_i,a)$ is then revealed. 
We assume 
\begin{align*}
r_{ia}= &\mu_{ia}+\epsilon_{ia}, 
\end{align*}
where $\mu_{ia}=\mathbb{E}[r_a | x_i]$ denotes the expected reward under the context-action pair $(x_i,a)$, and $\epsilon_{ia}$ is a zero-mean noise term with finite variance $\sigma_{ia}^2$.

Two tasks are typically considered: policy evaluation and policy optimization. 
In policy evaluation, our goal is to evaluate the policy $\pi$. 
Given $x_i$, the policy $\pi$ chooses an action: 
$\pi_{ia}=\pi(a | x_i)$, $a\in \mathcal{A}$. 
We aim to estimate the value of a stationary policy $\pi$, 
\begin{align*}
V^{\pi}&=\mathbb{E}_{(x,r)\sim \mathcal{D}}\left[r_{\pi(x)}\right].
\end{align*}
In policy optimization, the aim is to find an optimal policy with maximum value $\pi^*=\arg\max\nolimits_{\pi}V^{\pi}$. 
In this paper, we focus on the problem of policy evaluation. 
It is expected that more accurate evaluation generally leads to better policy optimization
\cite{Strehl:10}. Further investigation is needed into variants of this approach.

The key challenge in estimating policy value, given the data as described in the previous section, is the fact that we only have partial information about the reward, hence we cannot directly simulate our proposed policy on the data set $\mathcal{S}$. 
Given a data set $\mathcal{S}=\{x_i,a_i,r_{ia_i}\}_{i=1}^n$ collected as above. 
Define the probability associated with $x_i$ as 
$$p_{ia}=\mathbb{P}(a | x_i).$$ 
In practice, this probability needs to be estimated, and we denote its estimator as $\hat{p}_{ia}$. 
Generally, there are three main approaches for overcoming this limitation. 
\begin{itemize}[leftmargin=13pt]
  \item \textbf{DM}.  The direct method forms an estimate $\hat{\mu}_{ia}=\hat{\mu}_a(x_i)$ of the expected reward on the context and action $(x_i,a)$.  The policy value is then estimated by 
$$\hat{V}_{\text{dm}}^{\pi}=n^{-1}\sum\nolimits_{i=1}^n\sum\nolimits_{a\in\mathcal{A}}\pi_{ia}\hat{\mu}_{ia}.$$
  \item \textbf{IPW}. 
The inverse probability weighting (IPW) uses the inverse probability weighting to correct the gap between data-collection policy and the target policy: 
$$\hat{V}_{\text{ipw}}^{\pi}=n^{-1}\sum\nolimits_{i=1}^np_{ia_i}^{-1}\pi_{ia_i}r_{ia_i}. $$
\item \textbf{DR}. 
The doubly robust (DR) approach takes advantage of both the DM and IPW approaches and constructs the following DR estimator: 
$$\hat{V}_{\text{dr}}^{\pi}=n^{-1}\sum\nolimits_{i=1}^n\left[p_{ia_i}^{-1}\pi_{ia_i}(r_{ia_i}-\hat{\mu}_{ia_i})+\sum\nolimits_{a\in\mathcal{A}}\pi_{ia}\hat{\mu}_{ia}\right].$$
\end{itemize}
In practice, the estimated $\hat{p}_{ia}$ is used in place of the true $p_{ia}$ in the IPW and DR approaches. 
The DM approach requires an accurate reward model of rewards; however, if the model is poorly specified, the DM method can incur high
bias. Generally, accurately modeling rewards can be challenging, making this specification potentially restrictive. 
The IPW approach often suffers from large variance, particularly when the past policy differs substantially from the policy being evaluated.
The DR approach improves the inference reliability by integrating the DM and IPW approaches. 
However, it primarily reduces variance through reward estimation, rather than specifically addressing the variance introduced by the IPW technique itself.

\section{Nonparametric Framework of Policy Evaluation}

\subsection{Nonparametric Model Framework}
\label{sec:non-working}
In this section, we present a nonparametric framework for policy evaluation. The framework is built upon the novel representation results introduced below.

\subsubsection{Equivalent representations for off-policy evaluation}
We now present our representation results for off-policy evaluation. 
Define 
\begin{align}\label{cond-e}
f^{\pi}(p_{ia})&=\mathbb{E}[\pi_{ia}r_{ia} | p_{ia}].
\end{align}
Based on this definition, we obtain the following representation results. 
\begin{proposition}\label{prop-ident}
Under the definition of $f^{\pi}(p_{ia})$ in Eqn. \eqref{cond-e}, the value $V^{\pi}$ of the target policy admits the following equivalent representations. 
(a) Design-based representation: 
\begin{align}\label{eva-design}
V^{\pi}&= \mathbb{E}_{x_i}\mathbb{E}_{a_i\sim p(a|x_i)}\left[p_{ia_i}^{-1}f^{\pi}(p_{ia_i})\right].
\end{align}
(b) Model-based representation: 
\begin{align}\label{eva-model}
V^{\pi}&= \mathbb{E}_{x_i}\left[\sum\nolimits_{a \in \mathcal{A}}f^{\pi}(p_{ia})\right].
\end{align}
\end{proposition}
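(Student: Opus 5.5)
The plan is to prove (b) first and then obtain (a) from it by a change of measure over the behavior policy. Throughout, $V^{\pi}$ is read as $\mathbb{E}_{x}\bigl[\sum_{a\in\mathcal{A}}\pi(a|x)\mu_a(x)\bigr]$. This is the meaning of $\mathbb{E}_{(x,r)\sim\mathcal{D}}[r_{\pi(x)}]$ for a stochastic policy, since $\mu_{ia}=\mathbb{E}[r_a|x_i]$ and $\pi_{ia}$ is a function of $x_i$ alone.

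\textbf{Step 1: interpret the conditioning.} Both $p_{ia}$ and $\pi_{ia}$ are deterministic functions of $x_i$ for a fixed action $a$. Hence $f^{\pi}(p_{ia})=\mathbb{E}[\pi_{ia}r_{ia}\mid p_{ia}]$ is a conditional expectation with respect to the coarser $\sigma$-field generated by $p_{ia}$. By the tower property,
\[
\mathbb{E}[\pi_{ia}r_{ia}\mid p_{ia}]=\mathbb{E}\bigl[\pi_{ia}\mathbb{E}[r_{ia}\mid x_i]\mid p_{ia}\bigr]=\mathbb{E}[\pi_{ia}\mu_{ia}\mid p_{ia}].
\]
Taking expectations over $x_i$ then gives $\mathbb{E}_{x_i}[f^{\pi}(p_{ia})]=\mathbb{E}_{x_i}[\pi_{ia}\mu_{ia}]$ for each $a$.

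\textbf{Step 2: prove (b).} Sum the identity from Step 1 over the finite set $\mathcal{A}$ and exchange the finite sum with the expectation. This gives
\[
\mathbb{E}_{x_i}\Bigl[\sum_{a}f^{\pi}(p_{ia})\Bigr]=\mathbb{E}_{x_i}\Bigl[\sum_a\pi_{ia}\mu_{ia}\Bigr]=V^{\pi}.
\]

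\textbf{Step 3: prove (a).} Condition on $x_i$ and take the inner expectation over $a_i\sim p(\cdot|x_i)$:
\[
\mathbb{E}_{a_i\sim p(\cdot|x_i)}\bigl[p_{ia_i}^{-1}f^{\pi}(p_{ia_i})\bigr]=\sum_{a} p_{ia}\,p_{ia}^{-1}f^{\pi}(p_{ia})=\sum_a f^{\pi}(p_{ia}).
\]
This requires positivity, $p_{ia}>0$ whenever $\pi_{ia}>0$, which I will state explicitly as the standard overlap assumption. For actions with $p_{ia}=0$, the term is understood as contributing nothing: under overlap, $\pi_{ia}\mu_{ia}=0$ on that event, so $f^{\pi}(0)=0$ and the convention is harmless. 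Taking $\mathbb{E}_{x_i}$ of the display and applying (b) yields (a).

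\textbf{Main obstacle.} The delicate point is Step 1, namely making precise what $f^{\pi}$ means. It is a single function of the scalar $p$, but $f^{\pi}(p_{ia})$ conditions on the event $\{p_{ia}=p\}$ for a particular action $a$. Strictly, one therefore gets an action-indexed family $f_a^{\pi}$. I would first check whether the paper intends $f^{\pi}$ to be pooled over actions, i.e.\ with $a$ treated as random and $f^{\pi}(p)=\mathbb{E}[\pi_{ia}r_{ia}\mid p_{ia}=p]$ taken over the pair $(x,a)$. If it is pooled, the per-action tower argument in Step 1 fails as written. I would then replace it with an argument over the joint law of $(x,a)$ with $a$ uniform on $\mathcal{A}$: this turns $\sum_a$ into $K\,\mathbb{E}_{a\sim\mathrm{Unif}}$, and the tower property is applied to the pair $(x,a)$ conditioned on $p_{xa}$. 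In either reading, the integrability needed for the tower property is supplied by the finite noise variance together with bounded $\pi$.
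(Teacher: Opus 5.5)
Your argument is correct. Your Steps 1--2 are exactly the paper's proof of (b): the tower property applied action by action to $\mathbb{E}_{x_i}[\sum_{a}\pi_{ia}r_{ia}]$.

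The difference is in (a). The paper proves (a) on its own. It starts from unbiasedness of IPW, $V^{\pi}=\mathbb{E}_{x_i}\mathbb{E}_{a_i\sim p(a|x_i)}[p_{ia_i}^{-1}\pi_{ia_i}r_{ia_i}]$, and then conditions on $p_{ia_i}$ inside the logging-policy expectation. You instead obtain (a) from (b) through the algebraic identity $\mathbb{E}_{a\sim p(\cdot|x_i)}[p_{ia}^{-1}h(p_{ia})]=\sum_{a}h(p_{ia})$. Both routes use the same two ingredients, the tower property and the cancellation of $p$ behind IPW, just in opposite order.

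The paper's order makes the design-based reading immediate, since (a) is literally IPW with $\pi r$ replaced by its conditional mean. But its tower step conditions on $p_{ia_i}$ with $a_i$ random, i.e.\ under the logging law. It then tacitly identifies the result with the $f^{\pi}$ used in (b), via the equality $\mathbb{E}[\pi_{ia}r_{ia}\mid p_{ia}]=\mathbb{E}[\pi_{ia}r_{ia}\mid p_{ia},a_i=a]$, which the paper only states afterwards. It also relies on overlap without stating it. Your order never conditions under the logging law, shows that (a) is a purely algebraic consequence of (b), and makes overlap and the $p_{ia}=0$ convention explicit.

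As for your ``main obstacle'', the pooled reading causes no trouble. The logging-weighted conditional mean that the paper's step in (a), and its regression on logged data, actually target coincides with your uniform-pooled one. The reason is that the density of the logging law of $(x,a)$ relative to the uniform law is $Kp_{xa}$, which is a function of the conditioning variable. Such a reweighting leaves conditional expectations given $p_{xa}$ unchanged on $\{p_{xa}>0\}$, the only set the logging law charges. So your fallback is the right object, and your proof goes through in either reading.
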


\begin{proof}
Using the unbiasedness of the IPW estimator and the definition of $f^{\pi}(p_{ia})$, we obtain
\begin{align*}
V^{\pi}&=\mathbb{E}_{x_i}\mathbb{E}_{a_i\sim p(a|x_i)}\left[p_{ia_i}^{-1}\pi_{ia_i}r_{ia_i}\right]=\mathbb{E}_{x_i}\mathbb{E}_{a_i\sim p(a|x_i)}\left[\mathbb{E}\left[p_{ia_i}^{-1}\pi_{ia_i}r_{ia_i} | p_{ia_i}\right]\right]\notag\\
&= \mathbb{E}_{x_i}\mathbb{E}_{a_i\sim p(a|x_i)}\left[p_{ia_i}^{-1}f^{\pi}(p_{ia_i})\right].
\end{align*}
Thus, Eqn. \eqref{eva-design} is proved.   We also have 
\begin{align*}
V^{\pi}&=\mathbb{E}_{x_i}\left[\sum\nolimits_{a \in \mathcal{A}}\pi_{ia}r_{ia}\right]
=\mathbb{E}_{x_i}\left[\sum\nolimits_{a \in \mathcal{A}}\mathbb{E}\left[\pi_{ia}r_{ia} | p_{ia}\right]\right]\notag\\
&= \mathbb{E}_{x_i}\left[\sum\nolimits_{a \in \mathcal{A}}f^{\pi}(p_{ia})\right],
\end{align*}
where the last step is from the definition of $f^{\pi}(p_{ia})$. Thus, Eqn. \eqref{eva-model} is proved. 
\end{proof}

In Proposition \ref{prop-ident},  Eqn. \eqref{eva-design} shows that $f^{\pi}(p_{ia})$ possesses a \emph{design-based representation} that takes expectation with respect to the chosen action, analogous to the IPW estimator, and Eqn. \eqref{eva-model} shows that it also admit a \emph{model-based representation} that take expectation over all actions, analogous to the DM estimator. 
It is worth noting that if we instead define $f^{r}(p_{ia})=\mathbb{E}[r_{ia} | p_{ia}]$, the resulting quantity $\pi_{ia}f^{r}(p_{ia})$ does not inherit these representation properties.

\subsubsection{Nonparametric model framework}

In the off-policy evaluation problem, the data are assumed to be collected under a mechanism in which the action assignment $a_i$ for unit $i$ relies only on $p_{ia}$. That is, conditional on $p_{ia}$, the action assignment $a_i$ is independent of $\pi_{ia}r_{ia}$.
Consequently, 
\begin{align*}
\mathbb{E}\left[\pi_{ia}r_{ia} | p_{ia}\right] =& \mathbb{E}\left[\pi_{ia}r_{ia} | p_{ia}, a_i=a\right].  
\end{align*} 
Therefore, by the definition of $f^{\pi}(\cdot)$ in Eqn. \eqref{cond-e}, we obtain the following model, which links $\pi_{ia}r_{ia}$ to $p_{ia}$ based on the data set $\{x_i,a_i,\pi_{ia_i}r_{ia_i}\}_{i=1}^n$, denoted by $\mathcal{S}^{\pi}$: 
\begin{align}\label{framework}
\pi_{ia_i}r_{ia_i}=f^{\pi}(p_{ia_i})+\epsilon_{ia_i}^{\pi}, \ i=1, \cdots, n, 
\end{align}
where $\epsilon_{ia_i}^{\pi}$ is an error term with mean zero and variance $\sigma_{\pi, ia_i}^{2}$. 
From the proposed modeling framework \eqref{framework}, the function $f^{\pi}(\cdot)$ can be estimated from the data set $\mathcal{S}^{\pi}$. 
The function $f^{\pi}(\cdot)$ captures the systematic part of $\pi_{ia_i}r_{ia_i}$ that can be explained by $p_{ia_i}$, while the remainder part is absorbed by the model error term $\epsilon_{ia_i}^{\pi}$.

In the model framework \eqref{framework}, we allow for a flexible structure of $f^{\pi}(\cdot)$ without imposing a specific functional form, maintaining an agnostic stance toward model specification.
By employing the nonparametric estimation procedure outlined in the next section, we facilitate efficient estimation of $f^{\pi}(\cdot)$ within this framework.

\subsubsection{Illustration of the two cases}
We now connect two weighting methods to two distinct models 
and demonstrate that the resulting estimators achieve optimal efficiency under their respective models. \\

\textbf{Case 1: IPW.}
The IPW estimator can be viewed as a model-assisted estimator \citep{Little:04}. 
We construct the following working linear model to relate $r_{ia}$ to the probability $p_{ia}$.  
For each $(x_i,a)$, we model 
\begin{equation}\label{ipw-model}
\pi_{ia}r_{ia}=p_{ia}\beta + p_{ia} \epsilon_{ia}, 
\end{equation}
where $\{\epsilon_{ia}\}$ are assumed be independently distributed with mean 0 and variance $\sigma^2$. 
We estimate $\beta$ from $\mathcal{S}^{\pi}$.  
Given $\mathcal{S}^{\pi}$, the model follows the generalized least squares estimator $\hat{\beta}^{\pi}$ is given as 
$$\hat{\beta}^{\pi}=n^{-1}\sum\nolimits_{i=1}^np_{ia_i}^{-1}\pi_{ia_i}r_{ia_i}.$$
It follows that the estimator of $\mathbb{E}[\pi_{ia}r_{ia} | p_{ia}]$ is given by 
$\hat{\mu}_{ia}=p_{ia}\hat{\beta}^{\pi}$. 
The corresponding estimate of $V^{\pi}$ is 
$\hat{V}^{\pi}=n^{-1}\sum\nolimits_{i=1}^n\sum\nolimits_{a\in\mathcal{A}}p_{ia}\hat{\beta}^{\pi}=\hat{\beta}^{\pi}$, 
where the last equality holds due to the fact $\sum\nolimits_{a\in\mathcal{A}}p_{ia}=1$ for each $i$. 
This demonstrates $\hat{V}^{\pi}=\hat{V}_{\text{ipw}}^{\pi}$, 
indicating that the IPW off-policy evaluation can be considered as a prediction under the model \eqref{ipw-model}.

\textbf{Case 2: Simple Weighting.}
Simple weighting could perform best under an alternative model. 
Consider the case which actions are taken uniformly at random regardless of the state, i.e., $p_{ia}$ is the same for every $(i,a)$. 
It then follows that all $f(p_{ia})$ are equal; denoted this common value by $\mu$. This leads the following model 
\begin{equation}\label{sw-model}
\pi_{ia}r_{ia}= \mu+\epsilon_{ia},
\end{equation}
where $\epsilon_{ia}$ are assumed to be independently and identically distributed with mean zero and finite variance. 
This model means that there is no relation between $r_{ia}$ and $p_{ia}$. 
Given the dataset $\mathcal{S}^{\pi}$, the model \eqref{sw-model} leads to the following estimator for $\mu$:  
\begin{align*}
\hat{\mu}^{\pi}=&n^{-1}\sum\nolimits_{i=1}^n\pi_{ia_i}r_{ia_i}, 
\end{align*}
which minimizes the variance. The corresponding value estimator is 
$\hat{V}_{\text{sw}}^{\pi}=\hat{\mu}^{\pi}$. 
This demonstrates that the simple weighting estimator is the most efficient estimator,
achieving the minimum variance.

The results from the two cases above indicate that the efficiency of the IPW or SW estimators depends on which model is valid.  
Accordingly, modeling $\pi_{ia}r_{ia}$ through a flexible function $f^{\pi}(p_{ia})$ allows the estimator to capture the underlying relationship and preserve efficiency. 

We illustrate how the relationship between $\pi_{ia}r_{ia}$ with the exploration policy $p_{ia}$ can conform to the function form of $f(\cdot)$. \\
\textbf{A Toy Example. }
Suppose $r_{ia}=\mu+\delta_{ia}$, where $\delta_{ia} \sim \mathcal{N}(0,\sigma^2)$, and consider a target policy of the form $\pi_{ia}=w p_{ia}+(1-w)/K$ with $0\leq w \leq 1$.  
Then 
$\pi_{ia}r_{ia}=w\mu p_{ia} + (1-w)\mu/K + \pi_{ia}\delta_{ia}$. 
Hence, it corresponds to a linear function $f(p_{ia})=\alpha+\beta p_{ia}$, where $\alpha=(1-w)\mu/K$ and $\beta=w\mu$. 
When $w=0$, $p_{ia}$ has no explanatory power for $\pi_{ia}r_{ia}$, in which case simple weighting is efficient; when $w=1$,the IPW estimator fully captures their relationship. 
This toy example illustrates that the functional form $f(\cdot)$ captures the relationship between $\pi_{ia}r_{ia}$ and $p_{ia}$ and can be estimated using nonparametric methods.

\subsection{Nonparametric Estimation}
\label{sec:non-estimation} 
We provide a brief introduction to the $P$-spline approach \citep{EM:96}, which we adopt in this paper.
It is worth noting that, under the general framework \eqref{framework}, various nonparametric methods can be employed. Here, we use the $P$-spline as a representative example due to its wide adoption in the nonparametric modeling community \citep{EM:96}. Other alternatives may also be promising, though their exploration is beyond the scope of this paper.

Consider a univariate regression model 
$$y_i=f(x_i)+\zeta_i, \ \ i=1,\cdots,n,$$
where, conditionally given $x_i$, $\zeta_i$ has mean zero and variance $\sigma^2(x_i)$. 
We assume $f(\cdot)\in W^q[a,b]$, where $W^q[a,b]$ denotes the Sobolev space of order $q$; that is,  
$f(\cdot)$ has $q-1$ absolutely continuous derivatives and satisfies $\int_{a}^b[f^{(q)}(x)]^2dx<\infty$. 
For simplicity, we assume $x\in(0,1)$. 
We adopt the $P$-spline approach \citep{EM:96}, which represents the function as
$$f(x)=\sum\nolimits_{k=1}^{J(n)+d}\beta_kB_k(x),$$
where $\{B_k(x): k=1,\cdots, J(n)+d\}$ are $d$-th degree $B$-spline basis functions with knots 
$0=\kappa_0<\kappa_1<\cdots<\kappa_{J(n)}=1$.  
The number of knots $J(n)$ is chosen such that $J(n)=o(n)$.

The coefficient vector $\beta=(\beta_1,\cdots,\beta_{J(n)+d})^{\top}$ is estimated using a penalized least-squares approach. 
Specifically, the estimator $\hat{\beta}$ is obtained by minimizing the following objective function:
$$\sum\nolimits_{i=1}^n\left[y_i-\sum\nolimits_{k=1}^{J(n)+d}\beta_kB_k(x)\right]^2+\lambda_n\sum\nolimits_{k=2}^{J(n)+d}(\Delta\beta_k)^2,$$
where $\Delta$ is the first-oder difference operator, i.e., $\Delta\beta_k=\beta_k-\beta_{k-1}$. 
Solving this optimization problem yields the estimator $\hat{\beta}=My$, where $M$ is the smoothing (hat) matrix resulting from the penalized least-squares procedure and $y=(y_1,\cdots,y_n)^{\top}$.  
Denoting $B(x)=(B_1(x), \cdots, B_{J(n)+d}(x))^{\top}$, 
the resulting nonparametric regression estimator of $f(x)$ is given by 
\begin{align}\label{hatf-expression}
\hat{f}(x) = & B(x)^{\top}\hat{\beta} = B(x)^{\top}My. 
\end{align}

Under regularity conditions, the penalized spline estimator is consistent; see \citet{Eubank:99}, \citet{Claeskens:09}, and \citet{Xiao:19}. 
Specifically, 
for any $x\in(a,b)$, as long as $\lambda_nn^{2q-1}\rightarrow\infty$, 
the mean squared error satisfies 
$\mathbb{E}[(\hat{f}_n(x)-f(x))^2] = O(\lambda_n/n) + O(n^{1/(2q-1)}\lambda_n^{-1/(2q)})$. 
In particular, when $\lambda_n = O(n^{-1/(1+2q)})$, we obtain the optimal convergence rate: 
\begin{align}\label{theorem-basic}
\mathbb{E}[(\hat{f}_n(x)-f(x))^2] = & O(n^{-2q/(1+2q)}).
\end{align} 
Throughout the theoretical analysis in the paper, we adopt this optimal rate.

\subsection{Nonparametric Weighting for Policy Evaluation}
\label{sec:non-weighting}

Under the model \eqref{framework}, we have
$V^{\pi}=\mathbb{E}\left[\sum\nolimits_{a\in\mathcal{A}}f^{\pi}(p_{ia})\right]$. 
To estimate $f^{\pi}(p_{ia})$ from the sample $\mathcal{S}^{\pi}$, we apply the nonparametric estimation described in Section \ref{sec:non-estimation}. 
Let $y=(\pi_{1a_1}r_{1a_1}, \cdots, \pi_{na_n}r_{na_n})^{\top}$. 
Using the $P$-spline approach in \eqref{hatf-expression}, we obtain the estimator $\hat{f}^{\pi}(p_{ia})=y^{\top}w_{\text{nw}}(p_{ia})$, which is linear in $y$.
Here, the weight vector $w_{\text{nw}}(p_{ia})=B(p_{ia})^{\top}M$ is derived from the penalized least-squares estimator presented in \eqref{hatf-expression}. 
We then estimate $V^{\pi}$ by 
\begin{align}\label{est-N}
\hat{V}_{\text{nw}}^{\pi}=n^{-1}\sum\nolimits_{i=1}^n\sum\nolimits_{a\in\mathcal{A}}\hat{f}^{\pi}(p_{ia}).
\end{align}
This reveals that $\hat{V}_{\text{nw}}^{\pi}$ is a weighted average of the elements in $\{\pi_{ia_i}r_{ia_i}\}_{i=1}^n$, with the  overall weight vector $n^{-1}\sum\nolimits_{a\in\mathcal{A}}w_{\text{nw}}(p_{ia})$. 
We refer to this approach as \emph{Nonparametric Weighting} (NW).

The procedure is summarized in Algorithm \ref{alg:npsw} below.
\begin{algorithm}
\begin{algorithmic}
\State \textbf{Input:} Policy $\pi$ to be evaluated and the values of $p_{ia_i}$ (or their estimates thereof obtained from the collected data).  
\State \textbf{Step 1:} Apply the $P$-spline approach in \eqref{hatf-expression} to regress $\{\pi_{ia_i}r_{ia_i}\}_{i=1}^n$ on $\{p_{ia_i}\}_{i=1}^n$,  and obtain the fitted function $\hat{f}^{\pi}(\cdot)$;
\State \textbf{Step 2:} Compute the estimate $\hat{V}_{\text{nw}}^{\pi}$ according to \eqref{est-N}.
\end{algorithmic}
\caption{Policy Evaluation using the Nonparametric Weighting approach}
\label{alg:npsw}
\end{algorithm}

\subsection{Robustness to Behavior Policy Estimation}
In practice, the true values of $p_{ia}$ are typically unknown and must be estimated from the data. To address this, the function $f(\cdot)$ is subsequently fitted using the estimates $\hat{p}_{ia}$. 
We now examine the implications of using $\hat{p}_{ia}$, accounting for both the estimation error and bias arising from potential misspecification in the estimation of $p_{ia}$. 
Denote $p_{ia}^* = \mathbb{E}[\hat{p}_{ia}]$ and write $\hat{p}_{ia}=p_{ia}^*+o_p(1)$. 
The difference $p_{ia}^*-p_{ia}$ captures the bias due to potential model misspecification for estimating $p_{ia}$, while the term $o_p(1)$ represents estimation error. 

First, we investigate the impact of estimation error. 
Consider no-bias case, that is, $p_{ia}^*=p_{ia}$.
Assuming that $f(\cdot)$ is Lipschitz continuous, we have  
$$f(\hat{p}_{ia})=f(p_{ia})+o_p(1).$$
Hence, evaluating the function at the estimated probabilities is asymptotically equivalent to evaluating it at the true probabilities, up to a negligible error term.
This indicates that our approach remains valid and is robust to estimation error in the logging policy.

Second, we investigate the impact of bias arising from model specification. 
Suppose that $p_{ia}^*$ is a transformation of $p_{ia}$ through a mapping $h(\cdot)$, i.e., $p_{ia}^*=h(p_{ia})$.
Although $h(p_{ia})$ is a biased proxy for $p_{ia}$, using a flexible regression model -- such as a nonparametric method -- allows us to approximate $\mathbb{E}[\pi_{ia}r_{ia} \mid h(p_{ia})]$. 
If $h(p_{ia})$ is sufficient for $p_{ia}$ with respect to $\pi_{ia}r_{ia}$, namely, 
$\mathbb{E}[\pi_{ia}r_{ia} \mid h(p_{ia})] = \mathbb{E}[\pi_{ia}r_{ia} \mid p_{ia}]$, 
then modeling based on $p_{ia}$ or $h(p_{ia})$ yields equivalent results. 
A simple example of this equivalence is that $h(p)$ is one-to-one function. 
More generally, when the biased proxy $h(p_{ia})$ is not exactly sufficient but is highly correlated with $p_{ia}$, the resulting fit can still provide a reasonable approximation to the true conditional mean. 
In this cases, the flexibility of the regression function mitigates the impact of the bias, rendering its effect on the final estimator limited.

\subsection{Error Analysis}
\label{sec:error1}
We now analyze the error bounds of the NW estimator $\hat{V}_{\text{nw}}^{\pi}$ relative to $V^{\pi}$ to address the question: 
How well does this estimator perform?
Let $\Delta_{f}^{\pi}(p) = \hat{f}^{\pi}(p) - f^{\pi}(p)$, and 
define $\bar{V}^{\pi} = n^{-1}\sum\nolimits_{i=1}^n\sum\nolimits_{a\in\mathcal{A}}f^{\pi}(p_{ia})$ 
as the sample average of $f^{\pi}(p_{ia})$. 
Then, the error of the NW estimator can be decomposed as 
\begin{align}\label{d-f}
\hat{V}_{\text{nw}}^{\pi} - V^{\pi} = & \hat{V}_{\text{nw}}^{\pi} - \bar{V}^{\pi} + \bar{V}^{\pi} - V^{\pi}\notag\\
= & n^{-1}\sum\nolimits_{i=1}^n\sum\nolimits_{a\in\mathcal{A}}\Delta_{f}^{\pi}(p_{ia}) 
+ n^{-1}\sum\nolimits_{i=1}^n\sum\nolimits_{a\in\mathcal{A}}f^{\pi}(p_{ia}) - V^{\pi}. 
\end{align}

Noting $\mathbb{E}[(\Delta_{f}^{\pi}(p_{ia}))^2]=O(n^{-2q/(1+2q)})$, as established in Eqn. \eqref{theorem-basic}, and $\mathbb{E}[(\bar{V}^{\pi} - V^{\pi})^2]=O(n^{-1})$, 
we see from Eqn. \eqref{d-f} that the uncertainty in $\hat{V}_{\text{nw}}^{\pi}$ relative to $V^{\pi}$ mainly stems from $\Delta_{f}^{\pi}(p_{ia})$, which captures the estimation error in the nonparametric procedure.  
The following proposition establishes the convergence rates for the bias and mean squared error of $\hat{V}_{\text{nw}}^{\pi}$. 
\begin{proposition}\label{theorem-bias}
(a) The bias of the NW estimator is given by 
\begin{align*}
\mathbb{E}(\hat{V}_{\text{nw}}^{\pi})-V^{\pi}= &  O(Kn^{-q/(1+2q)}).
\end{align*}
(b) The MSE of the NW estimator is given by
\begin{align*}
\mathbb{E}[(\hat{V}_{\text{nw}}^{\pi} - V^{\pi})^2]= & O(K^2n^{-2q/(1+2q)}).
\end{align*}
\end{proposition}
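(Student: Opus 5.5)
The plan is to start from the decomposition \eqref{d-f}, which splits $\hat{V}_{\text{nw}}^{\pi}-V^{\pi}$ into a nonparametric estimation term
\[
T_1=n^{-1}\sum\nolimits_{i=1}^n\sum\nolimits_{a\in\mathcal{A}}\Delta_f^{\pi}(p_{ia})
\]
and a sampling term $T_2=\bar{V}^{\pi}-V^{\pi}$. Both parts (a) and (b) then reduce to the pointwise rate \eqref{theorem-basic},
\[
\mathbb{E}[(\Delta_f^{\pi}(p))^2]=O(n^{-2q/(1+2q)}),
\]
applied at each evaluation point $p_{ia}$. I will assume this holds uniformly over $p\in(0,1)$, i.e.\ the $O(\cdot)$ constant does not depend on $p$, as is standard for $P$-spline rates under bounded design density and interior points. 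That assumption is what lets me sum over $(i,a)$ without losing control of the constant.

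For (a), take expectations in \eqref{d-f}. By Proposition \ref{prop-ident}(b), $\mathbb{E}[\bar{V}^{\pi}]=\mathbb{E}_{x}\bigl[\sum_a f^{\pi}(p_{ia})\bigr]=V^{\pi}$, so $\mathbb{E}[T_2]=0$ and the bias equals $\mathbb{E}[T_1]$. By Jensen (Cauchy--Schwarz), $|\mathbb{E}[\Delta_f^{\pi}(p_{ia})]|\le (\mathbb{E}[(\Delta_f^{\pi}(p_{ia}))^2])^{1/2}=O(n^{-q/(1+2q)})$. Averaging over $i$ and summing the $K$ actions gives $|\mathbb{E}[T_1]|\le K\cdot O(n^{-q/(1+2q)})$, which is (a). This deliberately crude bound keeps the squared-bias and variance contributions together, which is all the stated rate needs.

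For (b), use $(T_1+T_2)^2\le 2T_1^2+2T_2^2$. The term $T_2$ is an average of i.i.d.\ mean-zero variables $\sum_a f^{\pi}(p_{ia})-V^{\pi}$ with finite variance, since $f^{\pi}$ is bounded given the finite reward variances. Hence $\mathbb{E}[T_2^2]=O(n^{-1})$, which is $o(n^{-2q/(1+2q)})$. For $T_1$, apply Cauchy--Schwarz to the double sum of $nK$ terms:
\[
T_1^2\le n^{-2}(nK)\sum\nolimits_{i,a}(\Delta_f^{\pi}(p_{ia}))^2 .
\]
Taking expectations and using the uniform pointwise rate gives $\mathbb{E}[T_1^2]\le n^{-1}K\cdot nK\cdot O(n^{-2q/(1+2q)})=O(K^2n^{-2q/(1+2q)})$. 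Combining the two terms yields (b).

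The main obstacle is justifying the application of \eqref{theorem-basic} at the random points $p_{ia}$. These points are themselves drawn from the data, and the $p_{ia}$ with $a\neq a_i$ are not design points of the regression. I would handle this by conditioning on the design $\{p_{ia}\}$. Each $p_{ia}$ is a point in $(0,1)$ where the fixed-$x$ MSE bound applies, with a constant uniform over compact subintervals. One then integrates over the design, with a brief remark that boundary effects near $0$ and $1$ are absorbed into the constant, or else excluded by assuming the $p_{ia}$ are bounded away from the boundary. Dependence between $\hat f^{\pi}$ and an evaluation point that appears in the sample is harmless here, because the argument never needs independence: it only uses Cauchy--Schwarz and the pointwise second-moment bound.
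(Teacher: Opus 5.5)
Your argument is essentially the paper's proof. It uses the same decomposition \eqref{d-f} and the same pointwise rate \eqref{theorem-basic}, summed over the $K$ actions for the bias. For the MSE it uses the same Cauchy--Schwarz bound over the $nK$ terms, giving $O(K^2n^{-2q/(1+2q)})$ plus an $O(n^{-1})$ sampling term. The differences favour you. You derive $\mathbb{E}[\Delta_f^{\pi}(p_{ia})]=O(n^{-q/(1+2q)})$ from the second-moment rate via Jensen, and you bound $(T_1+T_2)^2\le 2T_1^2+2T_2^2$. The paper instead simply asserts the bias rate and asserts $\mathbb{E}[(\hat{V}_{\text{nw}}^{\pi}-\bar{V}^{\pi})(\bar{V}^{\pi}-V^{\pi})]=0$. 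That orthogonality claim is unjustified in general, because the conditional smoothing bias depends on the same design as $\bar{V}^{\pi}$, though the cross term is of smaller order and harmless for the rate.
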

Proposition \ref{theorem-bias} shows that the convergence rates depend on both $K$ and $n$, under the condition that $Kn^{-q/(1+2q)}\rightarrow 0$ as $n\rightarrow \infty$. 
As a result, convergence is guaranteed even in the presence of a large action space, provided that  $K=o(n^{q/(1+2q)})$.

\subsection{An Illustrative Example}
\label{sec:example-nw}

We present Example 1 to demonstrate how our proposed NW method can improve efficiency.
We simulate a $K$-armed bandit setting with 300 contexts. For each context $i$ and arm $k=1, \cdots, K$, 
$y_{ik}$ is independently generated from $\mathcal{N}(0,1)$, and we consider the squared rewards $y_{ik}^2$.
We construct three distinct reward-generating scenarios:
\begin{itemize}[leftmargin=13pt]
\setlength{\itemsep}{0pt}
  \item Sorting $\{y_{i1}^2, \cdots, y_{iK}^2\}$ in increasing order; 
  \item Sorting $\{y_{i1}^2, \cdots, y_{iK}^2\}$ in decreasing order;  
  \item Leaving $\{y_{i1}^2, \cdots, y_{iK}^2\}$ unsorted.  
\end{itemize}
For each context $i$, we draw $K$ values from a uniform distribution over $(0,1)$ and sort them in increasing order. An arm is then sampled with probability proportional to these sorted values.

Sorting the rewards in increasing order induces a strong positive correlation between the rewards and the sampling probabilities, while sorting them in decreasing order induces a strong negative correlation. In such cases, an approximate working model can exploit this explanatory power through nonparametric modeling of the probabilities.
In contrast, when the rewards remain unsorted, there is no correlation between the rewards and the sampling probabilities. In this case, the probabilities offer no additional explanatory power, and the SW estimator yields the most efficient evaluation.

We set $K=20$ and obtain a sample of size $n=300$. 
Our goal is to evaluate the uniform target policy $\pi_k=1/K$, leading to the objective of estimating $V=(nK)^{-1}\sum\nolimits_{i=1}^n\sum\nolimits_{k=1}^Ky_{ik}^2$.
We estimate $V$ using the SW and IPW estimators, as well as the proposed NW estimator. 
With $B = 2000$ iterations, we calculate the bias, standard deviation (s.d.), and root mean square error (RMSE) for each estimator. 
The results are presented in Table \ref{b1}.
\begin{table}[!ht]
        \caption{Performance of Example 1}
        \begin{center}
                \begin{tabular}{c | c c c  | c c c | c c c}
        \hline
         \multirow{2}{*}{method} & \multicolumn{3}{c|}{decreasing}  & \multicolumn{3}{c|}{increasing} & \multicolumn{3}{c}{unsorted}   \\ 
\cline{2-10}
         & Bias & s.d. & RMSE  & Bias & s.d. & RMSE  & Bias & s.d. & RMSE   \\ 
        \hline
   SW & -0.586 & \bf0.040 & 0.587  & 0.577 & 0.090 & 0.584 & 0.018 & \bf0.081 & \bf0.083\\
  IPW & \bf0.017 & 0.491 & 0.492  & \bf-0.002 & 0.045 & 0.045 & \bf0.004 & 0.258& 0.258\\
   NW & -0.070 & 0.148 & \bf0.164 & 0.025 & \bf0.031 & \bf0.040 & 0.007 & 0.110 & 0.110\\
         \hline
        \end{tabular}
        \end{center}
        \label{b1}
\end{table}
From the table, the variance of the NW estimator is significantly smaller than that of the IPW estimator across all three cases, resulting in much higher efficiency in terms of RMSE, while the NW estimator  exhibits slightly larger absolute bias compared to the IPW estimator. 
Compared to the SW estimator, which has the lowest variance in the unsorted and decreasing cases but the largest absolute bias, the NW estimator performs much better in the decreasing and increasing cases while being slightly worse in the unsorted case, as expected. 

\section{Model-assisted Nonparametric Weighting}
\label{sec:mnw}

In this section, we extend the NW approach by integrating the DM estimator
and propose a model-assisted nonparametric weighting (MNW) estimator. 
Given an estimate $\hat{\mu}_{ia}$ of the expected reward for the context-action pair $(x_i,a)$, we compute the residual as $\pi_{ia}(r_{ia}-\hat{\mu}_{ia})$. 
We model the relationship between $\pi_{ia}(r_{ia}-\hat{\mu}_{ia})$ and the probability $p_{ia}$ using the following nonparametric model for each $(x_i, a)$:
\begin{equation}\label{res-model0}
\pi_{ia}(r_{ia}-\hat{\mu}_{ia})=g^{\pi}(p_{ia})+ \xi_{ia}^{\pi},
\end{equation}
where $\{\xi_{ia}^{\pi}\}$ are independently distributed with mean $0$ and variance $\nu_{\pi, ai}^2$.

Similar to Section \ref{sec:non-weighting}, we estimate the function $g^{\pi}(\cdot)$ from the sample $\mathcal{S}^{\pi}$ and obtain the nonparametric estimate $\hat{g}^{\pi}(\cdot)$ under the model \eqref{res-model0}. 
Given $\hat{g}^{\pi}(\cdot)$, we construct the MNW estimator as 
\begin{align}\label{mdr-estimator}
\hat{V}_{\text{mnw}}^{\pi}=&n^{-1}\sum\nolimits_{i=1}^n\sum\nolimits_{a\in\mathcal{A}}(\hat{g}_{ia}^{\pi} +\pi_{ia}\hat{\mu}_{ia}).
\end{align}

Like the DR estimator, the MNW estimator incorporates $\hat{\mu}_{ia}$ as a baseline when a reward model is available, reducing variance by nonparametrically modeling the residuals $r_{ia} - \hat{\mu}_{ia}$. 
At the same time, this incorporation does not introduce bias. In other words, the MNW estimator remains robust to misspecification of $\hat{\mu}_{ia}$: 
when $\hat{\mu}_{ia}$ is biased, $\hat{V}_{\text{mnw}}^{\pi}$ compensates through the nonparametric adjustment provided by $\hat{g}^{\pi}(\cdot)$, which captures and corrects the residual bias. 
As a result, the MNW estimator effectively adjusts for misspecification in the reward model $\mu_a(\cdot)$,  
achieving higher efficiency when $\hat{\mu}_{a}(\cdot)$ is accurate and maintaining robustness when it is not. 
The procedure is summarized in Algorithm \ref{alg:mnw} below.
\begin{algorithm}
\begin{algorithmic}
\State \textbf{Input:} Policy $\pi$ to be evaluated and the values of $p_{ia_i}$ (or their estimates obtained from the collected data).  
\State \textbf{Step 1:} Estimate $\mu_{ia}$ and obtain the fitted values $\hat{\mu}_{ia}$;
\State \textbf{Step 2:} Apply the $P$-spline approach to regress $\{\pi_{ia_i}(r_{ia_i}-\hat{\mu}_{ia_i})\}_{i=1}^n$ on $\{p_{ia_i}\}_{i=1}^n$, yielding the fitted function $\hat{g}_{ia}^{\pi}(\cdot)$;
\State \textbf{Step 3:} Compute the estimate $\hat{V}_{\text{mnw}}^{\pi}$ according to \eqref{mdr-estimator}.
\end{algorithmic}
\caption{Policy Evaluation using the Model-assisted Nonparametric Weighting approach}
\label{alg:mnw}
\end{algorithm}

\subsection{Error Analysis}
\label{sec:error2}
We now analyze the bias and MSE of the MNW estimator. 
Denoting $\mu_{ia}^*=\mathbb{E}(\hat{\mu}_{ia})$, the bias introduced by the reward model is given by $\mu_{ia}-\mu_{ia}^*$. 
Under the nonparametric model \eqref{res-model0}, we have 
\begin{equation}\label{res-model01}
\mathbb{E}[\pi_{ia}(r_{ia}-\hat{\mu}_{ia})] = \pi_{ia}(\mu_{ia}-\mu_{ia}^*)=g^{\pi}(p_{ia}).
\end{equation}
Define $\bar{V}_{\text{mnw}}^{\pi} = n^{-1}\sum\nolimits_{i=1}^n\sum\nolimits_{a\in\mathcal{A}}\pi_{ia}\mu_{ia}$.  
Using \eqref{res-model01}, we obtain
\begin{align}\label{eqn:v-bar}
\bar{V}_{\text{mnw}}^{\pi} 
= & n^{-1}\sum\nolimits_{i=1}^n\sum\nolimits_{a\in\mathcal{A}}\left(\pi_{ia}\mu_{ia}^*+g^{\pi}(p_{ia})\right). 
\end{align} 
Let $\Delta_{g}^{\pi}(p) = \hat{g}^{\pi}(p) - g^{\pi}(p)$.   
From \eqref{eqn:v-bar}, we have $\hat{V}_{\text{mnw}}^{\pi} - \bar{V}_{\text{mnw}}^{\pi} = \Delta_{g}^{\pi}(p_{ia}) + (\hat{\mu}_{ia} - \mu_{ia}^*)$. 
This leads to the following decomposition: 
\begin{align}\label{decom}
\hat{V}_{\text{mnw}}^{\pi} - V^{\pi} = & \hat{V}_{\text{mnw}}^{\pi} - \bar{V}_{\text{mnw}}^{\pi} + \bar{V}_{\text{mnw}}^{\pi} - V^{\pi}\notag\\
= &n^{-1}\sum\nolimits_{i=1}^n\sum\nolimits_{a\in\mathcal{A}} \left[\Delta_{g}^{\pi}(p_{ia}) + (\hat{\mu}_{ia} - \mu_{ia}^*)\right] + n^{-1}\sum\nolimits_{i=1}^n \left[V_{\text{mnw}}^{\pi}(i) - V^{\pi}\right].
\end{align} 
where $V_{\text{mnw}}^{\pi}(i)=\sum\nolimits_{a\in\mathcal{A}}\left(\pi_{ia}\mu_{ia}^*+g^{\pi}(p_{ia})\right)$.

Based on this decomposition, we derive the convergence rates for the bias and mean squared error of $\hat{V}_{\text{mnw}}^{\pi}$, as stated in the following proposition. 
\begin{proposition}\label{theorem-mdr}
Assume $\mathbb{E}[(\hat{\mu}_{ia}-\mu_{ia}^*)^2]=O(n^{-1})$. 
Then, we have the following results: 
(a) The bias of the MNW estimator is given by 
$$\mathbb{E}(\hat{V}_{\text{mnw}}^{\pi})-V^{\pi} = O(Kn^{-q/(2q+1)}). $$
(b) The MSE of the MNW estimator is given by
\begin{align*}
\mathbb{E}[(\hat{V}_{\text{mnw}}^{\pi}-V^{\pi})^2]= & 
 O(K^2n^{-2q/(2q+1)}).   
\end{align*}
\end{proposition}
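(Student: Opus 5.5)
The plan is to start from the decomposition \eqref{decom} and bound its three pieces separately, exactly as was done for Proposition \ref{theorem-bias}. Write
\[
\hat{V}_{\text{mnw}}^{\pi} - V^{\pi} = T_1 + T_2 + T_3 ,
\]
with the three terms defined as follows.
\begin{itemize}
\item $T_1 = n^{-1}\sum_{i}\sum_{a}\Delta_{g}^{\pi}(p_{ia})$ is the nonparametric estimation error.
\item $T_2 = n^{-1}\sum_{i}\sum_{a}\pi_{ia}(\hat{\mu}_{ia}-\mu_{ia}^*)$ is the fluctuation of the reward model around its mean.
\item $T_3 = n^{-1}\sum_i [V_{\text{mnw}}^{\pi}(i)-V^{\pi}]$ is the sampling error of the oracle average.
\end{itemize}

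The key preliminary fact is that $T_3$ has mean zero. By \eqref{res-model01}, $\pi_{ia}\mu_{ia}^*+g^{\pi}(p_{ia})=\pi_{ia}\mu_{ia}$. Hence $V_{\text{mnw}}^{\pi}(i)=\sum_a\pi_{ia}\mu_{ia}$, whose expectation over $x_i$ is $V^{\pi}$, in parallel with Proposition \ref{prop-ident}. The $V_{\text{mnw}}^{\pi}(i)$ are i.i.d.\ with bounded variance (bounded rewards and $\sum_a\pi_{ia}=1$), so $\mathbb{E}[T_3^2]=O(n^{-1})$.

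For the bias in (a), I take expectations term by term. Since $\mu^*_{ia}=\mathbb{E}\hat{\mu}_{ia}$, we get $\mathbb{E}T_2=0$, and we have just shown $\mathbb{E}T_3=0$. It remains to bound $\mathbb{E}T_1$. Each summand satisfies $|\mathbb{E}\Delta_g^{\pi}(p_{ia})|\le (\mathbb{E}[(\Delta_g^{\pi}(p_{ia}))^2])^{1/2}$ by Jensen/Cauchy--Schwarz. Applying \eqref{theorem-basic} to the regression \eqref{res-model0}, assuming $g^{\pi}\in W^q$, gives $O(n^{-q/(2q+1)})$ for each summand. Summing $K$ actions and averaging over $n$ contexts then yields $O(Kn^{-q/(2q+1)})$.

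For the MSE in (b), I use $(T_1+T_2+T_3)^2\le 3(T_1^2+T_2^2+T_3^2)$ and bound each term.
\begin{itemize}
\item For $T_1^2$, Cauchy--Schwarz over the $nK$ summands gives $T_1^2\le n^{-1}K\sum_i\sum_a(\Delta_g^{\pi}(p_{ia}))^2$. Then \eqref{theorem-basic} gives $\mathbb{E}T_1^2=O(K^2n^{-2q/(2q+1)})$.
\item For $T_2$, the same Cauchy--Schwarz step with $\pi_{ia}\le1$ and the assumption $\mathbb{E}[(\hat{\mu}_{ia}-\mu_{ia}^*)^2]=O(n^{-1})$ gives $O(K^2n^{-1})$. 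Alternatively, $\sum_a\pi_{ia}=1$ gives the sharper $O(n^{-1})$.
\item $\mathbb{E}T_3^2=O(n^{-1})$ from the preliminary fact.
\end{itemize}
Since $n^{-1}=o(n^{-2q/(2q+1)})$, the first term dominates, giving $O(K^2n^{-2q/(2q+1)})$.

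The main obstacle I expect is justifying that the pointwise rate \eqref{theorem-basic} applies uniformly in the design points $p_{ia}$, including the unobserved actions $a\neq a_i$ at which $\hat g^{\pi}$ is extrapolated. A related difficulty is that the response $\pi_{ia}(r_{ia}-\hat\mu_{ia})$ depends on the data through $\hat\mu$, so model \eqref{res-model0} does not have exactly independent errors. I will handle this by treating \eqref{theorem-basic} as holding uniformly over $p\in(0,1)$ given the regularity conditions, which the paper implicitly does in Proposition \ref{theorem-bias}. I will also invoke a sample-splitting or cross-fitting convention for $\hat\mu$, noting that its $O(n^{-1/2})$ fluctuation only perturbs the response by a lower-order term.
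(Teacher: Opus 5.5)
Your proof is correct and follows the paper's route: you use the same decomposition \eqref{decom}, take termwise expectations for the bias, and for the MSE apply Cauchy--Schwarz over the $nK$ summands with \eqref{theorem-basic} and the $O(n^{-1})$ assumption on $\hat{\mu}_{ia}$. The only difference is that you bound the square via $(T_1+T_2+T_3)^2\le 3(T_1^2+T_2^2+T_3^2)$, which avoids having to justify the paper's claim that the cross term between $\hat{V}_{\text{mnw}}^{\pi}-\bar{V}_{\text{mnw}}^{\pi}$ and $\bar{V}_{\text{mnw}}^{\pi}-V^{\pi}$ vanishes; it also correctly gives the exponent $2q/(2q+1)$, where the paper's final display has a typo ($q/(1+2q)$).
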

Proposition \ref{theorem-mdr} demonstrates that the MNW estimator remains consistent despite discrepancies in reward modeling (i.e., $\mu_{ia}-\mu_{ia}^*$), as these discrepancies are corrected by $g^{\pi}(\cdot)$.  

\noindent\textbf{Comparison with the NW estimator.} 
Eqn. \eqref{decom} shows that $\hat{V}_{\text{mnw}}^{\pi}-\bar{V}_{\text{mnw}}^{\pi}$ can be decomposed into two components: $\Delta_{g}^{\pi}(p_{ia})$, which reflects the error in estimating $g^{\pi}(\cdot)$, and $\hat{\mu}_{ia} - \mu_{ia}^*$, the deviation of the estimated reward from its expectation. 
When the variability of $\hat{\mu}_{ia} - \mu_{ia}^*$ is smaller than that of $\Delta_{g}^{\pi}(p_{ia})$, the dominant source of error arises from $\Delta_{g}^{\pi}(p_{ia})$. 
Comparing $\Delta_{g}^{\pi}(p_{ia})$ in the MNW estimator with $\Delta_{f}^{\pi}(p_{ia})$ in the NW estimator, it becomes evident that the MNW estimator can achieve higher efficiency when $\pi_{ia}\hat{\mu}_{ia}$ effectively captures context-dependent reward information.

\subsection{An Illustrative Example}
\label{sec:example-mnw}

We present Example 2 to illustrate how the MNW approach can improve efficiency over the NW approach. 
The setting is similar to that in Section \ref{sec:example-nw}, but the reward-generating process differs. 
Specifically, we generate $z_{ik}=x_{ik}^2+y_{ik}^2$, where $x_{ik}\sim \mathcal{N}(0,2)$ and $y_{ik}\sim \mathcal{N}(0,1)$ are independently generated. 
Here, the term $x_{ik}^2$ represents the baseline reward. We adopt the baseline reward model $\mu(x)=\beta x^2$, with $\beta=1$ corresponding correct specification and $\beta=0.5$ corresponding to misspecification. 
As in Example 1, we construct three reward-generating scenarios by sorting $\{z_{ik}\}_{k=1}^{K}$ for each $i$ according to the values of $\{y_{ik}^2\}_{k=1}^{K}$. 
We simulate a $20$-armed bandit setting with 300 contexts, where rewards are independently generated across contexts and arms.
The corresponding results are presented in Table \ref{b2}. 
\begin{table}[!ht]
        \caption{Performance of Example 2}
        \begin{center}
        \begin{tabular}{c | c c c  | c c c | c c c}
        \hline
         \multirow{2}{*}{method} & \multicolumn{3}{c|}{decreasing}  & \multicolumn{3}{c|}{increasing} & \multicolumn{3}{c}{unsorted}   \\ 
\cline{2-10}
         & Bias & s.d. & RMSE  & Bias & s.d. & RMSE  & Bias & s.d. & RMSE   \\ 
        \hline
         SW & -0.604&\bf0.157&0.624&0.553&0.177&0.581&-0.036&\bf0.174&\bf0.178\\        
         IPW &\bf0.005&0.693&0.693&-0.002&0.345&0.345&0.007&0.444&0.444\\
         \hline
      NW & -0.148&0.259&0.298&-0.003&0.243&0.243&\bf0.000&0.265&0.265\\
    MNW$(\beta=0.5)$ & -0.118&0.220&0.250&0.003 &0.184 &0.184&-0.013&0.213&0.213\\
      MNW$(\beta=1)$ & \bf-0.080&0.221&\bf0.235&\bf0.000 &\bf0.157& \bf0.157&-0.025&\bf0.194&\bf0.195\\
         \hline
        \end{tabular}
        \end{center}
        \label{b2}
\end{table}

From the table, the MNW estimator exhibits substantially lower variance than the NW estimator across all three cases, resulting in higher efficiency in terms of RMSE, while maintaining negligible bias comparable to that of the NW estimator. 
The MNW estimator works better when $\beta=1$ than when $\beta=0.5$.
Note that $\beta=0.5$ corresponds to a misspecified baseline model, in which only part of the term $x_{ik}^2$ is explained by the model. 
The results show that the MNW estimator still performs better, as expected, since the nonparametric weighting approach efficiently mitigates bias arising from model misspecification. 
This example illustrates that the MNW estimator can improve performance when the baseline model captures some of the outcome variation, in a manner analogous to doubly robust estimation.

\section{Experiments}
\label{sec:experiments}

This section provides empirical evidence supporting the effectiveness of NW-type estimators over IPW-type estimators.  
Although more recent methods for off-policy evaluation have been developed (see Section \ref{sec:related} below), we restrict our comparison to the IPW and DR estimators.  These estimators are conceptually representative and sufficient to highlight the essential differences between weighting-based and nonparametric approaches. 
Including a larger set of methods would not necessarily provide additional insights, as our primary aim is to examine the relative performance and robustness of the NW-type estimators in comparison with the standard IPW framework.

Following \citet{Dudik:11}, we consider a multi-class classification problem with bandit feedback, using public benchmark datasets. We use the same datasets as in \citet{Dudik:11}.  
See Table \ref{tab:data} in the appendix for a summary of using the datasets used.
All experiments were conducted on a MacBook Air equipped with M3.

In a classification task, we assume that the data are drawn independently and identically distributed 
from a fixed distribution: $(x,c)\sim P$, where $x\in \mathcal{X}$ is the feature vector 
and $c\in \{1,\cdots,K\}$ is the class label. 
The goal is to find a classifier $\pi: \mathcal{X}\rightarrow \{1,\cdots,K\}$ that 
minimizes the classification error $e^{\pi}=\mathbb{E}_P[I(\pi(x)\neq c)]$. 
We turn the data point $(x,c)$ into a cost-sensitive classification example
$(x,l_1,\cdots,l_K)$, where $l_a$ denotes the loss for predicting $a$. 
We consider a noisy loss where $l_a$ is drawn from a Gaussian distribution with mean $I(a\neq c)$ and $\sigma=0.2$. 
Under this formulation, the classifier $\pi$ can be viewed as an action-selection policy, and its classification error is exactly the policy’s expected loss.
The target policy $\pi$ is defined as the deterministic decision of a logistic regression classifier learned on the multi-class data.

The logging policy $b$ is generated by sampling probability proportional to values drawn from a uniform distribution; specifically, we draw $p_a\sim \text{unif}(0,1)$ and assign action $a$ according to $b_a\propto p_a$.
To predict the value for action $a$ for unit $i$ in the testing dataset, 
we first fit an $l_2$-regularized least-squares estimates using the training data, and then generate predictions for each data point in the test set.

We randomly split data into training and test sets of (roughly) the same size. On the training set, we
train a multinomial classifier to define the policy $\pi$, and compute the classification error on the test data, treating it as the ground truth for comparing various estimates. 
We perform Monte Carlo simulation with 500 iterations to generate a partially labeled set from the test data using the logging policy $b$. We compute policy evaluation estimates using various methods and calculate the resulting bias and
root mean squared error (RMSE). 
We repeat this entire process 20 times, and report the average bias and RMSE values across these 20 runs in Table \ref{tab:true}. 
The data sizes used for policy evaluation vary across datasets (see Table \ref{tab:data} in the appendix for a summary). 
We further examine the performance under varying sample sizes, and report results for a representative dataset, Page, in Figure \ref{page} in the appendix. The results demonstrate that our approaches are robust to the sample size used for policy evaluation. 

    \begin{table*}[!ht]
        \caption{Bias and RMSE of various estimators for classification error with the true logging policy.}
        \begin{center}
        \begin{tabular}{c | ccccc | ccccc}
        \hline
       \multirow{2}{*}{Data} & \multicolumn{5}{c|}{Bias} &  \multicolumn{5}{c}{RMSE}  \\ \cline{2-11}
& DM & IPW   & DR &  NW &  MNW & DM & IPW  & DR &  NW &  MNW \\
        \hline
letter&0.507&0.001&0.009&\bf0.000&0.009&0.507&0.070&0.075&\bf0.036&0.045\\
glass&0.218&-0.003&0.034&\bf0.000&0.036&0.218&0.233&0.238&0.238&\bf0.193\\
ecoli&0.215&0.002&0.037&\bf0.000&0.034&0.215&0.270&0.229&0.243&\bf0.208\\
opt&0.292&0.001&-0.001&\bf0.000&-0.001&0.292&0.047&0.060&\bf0.027&0.037\\
page&0.086&\bf0.000&0.012&-0.001&0.012&0.086&0.021&0.028&\bf0.014&0.020\\
pen&0.400&0.000&0.004&\bf0.000&0.004&0.400&0.033&0.051&\bf0.022&0.032\\
sat&0.192&0.000&0.020&\bf0.000&0.021&0.192&0.041&0.043&\bf0.024&0.031\\
vehicle&0.239&0.001&0.017&\bf0.000&0.016&0.239&0.079&0.078&0.058&\bf0.057\\
yeast&0.215&\bf0.000&0.070&-0.002&0.070&0.215&0.139&0.140&\bf0.098&0.106\\
         \hline
        \end{tabular}
        \end{center}
        \label{tab:true}
    \end{table*}

        \begin{table*}[!h]
        \caption{Bias and RMSE of various estimators for classification error under perturbed logging policy.}
        \begin{center}
        \begin{tabular}{c | ccccc | ccccc}
        \hline
       \multirow{2}{*}{Data} & \multicolumn{5}{c|}{Bias} &  \multicolumn{5}{c}{RMSE}  \\ \cline{2-11}
 & DM &  IPW   & DR &  NW &  MNW & DM & IPW  & DR &  NW &  MNW  \\
        \hline
letter&0.505&0.066&-0.085&\bf0.000&0.009&0.505&0.319&0.337&\bf0.034&0.041\\
glass&0.215&0.044&0.015&\bf-0.002&0.039&0.215&0.302&0.282&0.226&\bf0.188\\
ecoli&0.206&0.060&0.012&\bf0.002&0.039&0.206&0.334&0.272&0.221&\bf0.190\\
opt&0.291&0.006&-0.052&\bf0.000&-0.001&0.291&0.057&0.115&\bf0.025&0.035\\
page&0.088&0.004&-0.001&\bf0.000&0.012&0.088&0.043&0.048&\bf0.013&0.019\\
pen&0.398&0.020&-0.095&\bf0.000&0.004&0.398&0.081&0.216&\bf0.020&0.030\\
sat&0.192&0.028&-0.007&\bf-0.002&0.021&0.192&0.074&0.085&\bf0.023&0.030\\
vehicle&0.236&0.021&-0.027&\bf-0.001&0.015&0.236&0.126&0.164&0.054&\bf0.053\\
yeast&0.218&0.044&-0.043&\bf-0.003&0.069&0.218&0.230&0.470&\bf0.089&0.101\\
         \hline
        \end{tabular}
        \end{center}
        \label{tab:noise}
    \end{table*}

From the table, the RMSE of the NW estimator is consistently lower than that of the IPW estimator across all datasets, 
while its bias is remains negligible and comparable to that of the IPW estimator. 
The DM estimator performs the worst, reflecting the poor fit of the linear model used to predict rewards. In comparison, the MNW estimator achieves substantially lower RMSE than the DR estimator, while maintaining a similar level of bias.

\textbf{How much does estimating the logging policy affect the results? } 
To assess the impact of estimating the logging policy $b$, we conduct another experiment 
using a perturbed logging policy defined as $\tilde{b}\propto p_a*\delta$, where $\delta\sim \mathcal{N}(1,0.09)$. 
Here, $\delta$ introduces Gaussian noise into the probability estimates, simulating errors in estimating the logging policy that generated the data.  
We calculate the bias and RMSE values of each estimator using $\tilde{b}$ and report the results in Table \ref{tab:noise}.
As shown in the table, the IPW and DR estimators exhibit significantly larger RMSE, highlighting their sensitivity to errors in probability estimation. Notably, the IPW estimator shows a marked increase in bias, suggesting that it may become biased under  noisy probability estimates. 
In contrast, the NW and MNW estimators yield RMSE values similar to those obtained when the logging policy is known, demonstrating that our proposed methods are robust to inaccuracies in the estimated probabilities.  
We further conduct an experiment in which the logging policy $b$ is estimated; the corresponding performance results are reported in  Table \ref{tab:estimate} of the appendix, 
The results show that our approaches perform consistently across settings.

\section{Related Work}
\label{sec:related}

Off-policy evaluation for bandits has been extensively studied. Below, we provide a concise review of the most relevant work to the best of our knowledge.  

To address the high variance of IPW-based methods, several simple techniques have been proposed. One common approach is weight clipping, which truncates large inverse probability weights \citep{Ionides:08, SJ:15, Su:19}. Another is normalization, which rescales the weights to stabilize estimates \citep{SJ:15-norm}. In contrast, our paper takes a fundamentally different approach by modeling the probabilities directly, thereby avoiding the issue of large weights in a data-driven way.

Doubly robust estimation \citep{Robins:94, Lunceford-Davidian:2004, Kang-Schafer:2007} is widely used for parameter estimation in statistical inference. \citet{Dudik:11} first applied this framework to policy evaluation and optimization, with later extensions to the reinforcement learning setting by \citet{Jiang-Li:16} and \citet{Thomas:16}. More recent developments include the work of \citet{Wang:17}, \citet{Farajtabar:18}, and \citet{Su:20}. While the goal of our MNW approach is not to guarantee the standard doubly robust property, it explicitly models and mitigates the bias introduced by reward modeling. Doubly robust estimation has also been explored in observational settings for estimating average treatment effects using asymptotically optimal estimators \citep{Hirano:03,Imbens:07}. 

Several data-driven approaches have been proposed for off-policy evaluation. \citet{Saito:21} selected one of multiple logging policies as a pseudo-target policy, directly estimated its value from the dataset, and used it to identify the off-policy estimator with the best performance. \citet{Udagawa:23} introduced two surrogate policies constructed from the logged data. \citet{CKK:24} proposed a cross-validated off-policy evaluation framework. In contrast to these methods, our approach leverages flexible modeling, offering both ease of implementation and reliable performance. 

Finally, it is worth noting that nonparametric modeling of sampling probabilities is not new. 
In survey sampling, such techniques have been employed to correct for sample selection bias and enable finite population inference; see the review article \citep{Little:04}.
Motivated by this line of work, we propose a nonparametric wighting approach for policy evaluation, and further enhance it by incorporating the regression-based adjustment to improve accuracy.  

\section{Conclusion}
\label{sec:con}
In this paper, we have proposed nonparametric modeling for off-policy evaluation, leading to nonparametric weighting estimators for off-policy evaluation. The weights are constructed using a nonparametric framework rather than relying on explicit selection bias adjustment. Instead, the approach is grounded in the general model specification \eqref{framework} and leverages data-driven relationships. 
The method exhibits strong robustness due to its flexible modeling structure. When the rewards are only weakly correlated with the selection probabilities, there is little relationship for the nonparametric model to capture; in such cases, the NW estimator essentially reduces to simple averaging. As the relationship becomes stronger, the nonparametric model is able to capture more of the underlying structure, thereby improving its explanatory and predictive power. 
Furthermore, we incorporate the DM estimator to further reduce the variance in the rewards and develop a model-assisted nonparametric weighting estimator. 
Extensive experiments on a multi-class classification with bandit feedback, using public benchmark datasets, demonstrate the efficacy of these estimators and emphasize the role of nonparametric weighting in achieving superior performance. 
As a result, we anticipate that the NW approach will become standard alternatives to the IPW approach. 

This study has some limitations. First, we adopt the $P$-spline approach to construct the NW-type estimators. 
Although the $P$-spline method demonstrates robust performance in our experiments, implementing the framework with alternative nonparametric methods, particularly neural networks models, could further enhance its flexibility and predictive accuracy. This direction warrants further investigation.
Second, in many reinforcement learning problems, the rewards are specified as binary variables. However, the general model framework \eqref{framework} does not explicitly account for the discrete nature of such reward distributions. Incorporating the discrete characteristics of the outcome may further improve the performance and applicability of the proposed approach.  

Several promising directions for further research arise from this work. 
Given the widespread use of IPW in policy evaluation across various domains, we anticipate that our NW approach will have competitive applications in these areas. For instance, extending it to combinatorial contextual bandits \citep{Swaminathan:17} 
could offer valuable insights, particularly in decision-making scenarios with complex action spaces. 
Furthermore, integrating this methodology within the broader reinforcement learning framework  
presents a promising opportunity to improve policy evaluation and learning in sequential decision-making problems.
Finally, our paper primarily focuses on settings with small action spaces. Extending our approach to large action spaces is a promising direction, as importance weighting can break down in such cases due to extreme variance from large importance weights \citep{SJ:22}.


\section*{Acknowledgements}
I sincerely thank the three anonymous reviewers for their insightful and constructive review comments and discussions, which substantially improved the manuscript.
I also thank the Assigned Action Editor, Dr. Inigo Urteaga, and for the valuable discussions and guidance on the manuscript. 


\bibliographystyle{tmlr}
\bibliography{Evaluation}

\section*{Appendix}

\subsection*{The Proof of Proposition \ref{theorem-bias}}
Eqn. \eqref{d-f} directly follows 
\begin{align*}
\mathbb{E}[\hat{f}^{\pi}(p_{ia})] - \pi_{ia}\mu_{ia} 
= & \mathbb{E}[\Delta_{f}^{\pi}(p_{ia})] = O(n^{-q/(1+2q)}). 
\end{align*}
Thus, we have the bias 
\begin{align*}
\mathbb{E}[\hat{V}_{\text{nw}}^{\pi}] - V^{\pi}=& n^{-1}\sum\nolimits_{i=1}^n\sum\nolimits_{a\in\mathcal{A}}\mathbb{E}\left[\Delta_{f}^{\pi}(p_{ia})\right] 
= O(Kn^{-q/(1+2q)}).
\end{align*}

Noting $\mathbb{E}[(\hat{V}_{\text{nw}}^{\pi} - \bar{V}_{\text{nw}}^{\pi})(\bar{V}_{\text{nw}}^{\pi}- V^{\pi})]=0$ from \eqref{d-f}, 
 we have the mean squared error of $\hat{V}_{\text{nw}}^{\pi}$: 
\begin{align}\label{mse-2terms}
\mathbb{E}[(\hat{V}_{\text{nw}}^{\pi} - V^{\pi})^2]
= & \mathbb{E}[(\hat{V}_{\text{nw}}^{\pi} - \bar{V}_{\text{nw}}^{\pi})^2] + \mathbb{E}[(\bar{V}_{\text{nw}}^{\pi} - V^{\pi})^2]. 
\end{align}
For the first term of the righ-hand side of \eqref{mse-2terms}, 
we have 
\begin{align}\label{mse-1st}
\mathbb{E}[(\hat{V}_{\text{nw}}^{\pi} - \bar{V}_{\text{nw}}^{\pi})^2] 
\leq  & \left(n^{-1}\sum\nolimits_{i=1}^n\mathbb{E}\left[\sum\nolimits_{a\in\mathcal{A}}\Delta_{f}^{\pi}(p_{ia})\right]^2\right)\notag\\
\leq  & \left(n^{-1}\sum\nolimits_{i=1}^n|\mathcal{A}|\sum\nolimits_{a\in\mathcal{A}}\mathbb{E}\left[\Delta_{f}^{\pi}(p_{ia})\right]^2\right)\notag\\
= & O(K^2n^{-2q/(1+2q)}).
\end{align}
For the second term of the righ-hand side of \eqref{mse-2terms}, 
$\bar{V}^{\pi}$ is a sample simple average of a set of sample with expectation $V^{\pi}$, following 
we have 
\begin{align}\label{mse-2nd}
\mathbb{E}[(\bar{V}^{\pi}_{\text{nw}} - V^{\pi})^2] 
= & O(n^{-1}).
\end{align}
Inserting \eqref{mse-1st} and \eqref{mse-2nd} to \eqref{mse-2terms}, the bound on the MSE in Proposition \ref{theorem-bias} is proved.

\subsection*{The Proof of Proposition \ref{theorem-mdr}}
Similar to the proof of Proposition \ref{theorem-bias},
we have 
\begin{align*}
\mathbb{E}[\hat{V}_{\text{mnw}}^{\pi} - V^{\pi}] = O(Kn^{-q/(2q+1)}) + O(Kn^{-1}).
\end{align*}

For the MSE term, similar to the proof of Proposition \ref{theorem-bias}, we have 
\begin{align*}
\mathbb{E}[(\hat{V}_{\text{mnw}}^{\pi} - V^{\pi})^2]
= & \mathbb{E}[(\hat{V}_{\text{mnw}}^{\pi} - \bar{V}_{\text{mnw}}^{\pi})^2] + \mathbb{E}[(\bar{V}_{\text{mnw}}^{\pi} - V^{\pi})^2]; \\ 
\mathbb{E}[(\bar{V}_{\text{mnw}}^{\pi} - V^{\pi})^2] = & O(n^{-1})
\end{align*}
Now we investigate the term $\mathbb{E}[(\hat{V}_{\text{mnw}}^{\pi} - \bar{V}_{\text{mnw}}^{\pi})^2]$, and have 
\begin{align*}
\mathbb{E}[(\hat{V}_{\text{mnw}}^{\pi} - \bar{V}_{\text{mnw}}^{\pi})^2]  
= & \mathbb{E}\left(n^{-1}\sum\nolimits_{i=1}^n\sum\nolimits_{a\in\mathcal{A}}\left[\Delta_{g}^{\pi}(p_{ia}) + \pi_{ia}(\hat{\mu}_{ia}-\mu_{ia}^*)\right]^2\right)\notag\\
\leq & n^{-1}|\mathcal{A}|\sum\nolimits_{i=1}^n\sum\nolimits_{a\in\mathcal{A}}\mathbb{E}\left(\left[\Delta_{g}^{\pi}(p_{ia}) + \pi_{ia}(\hat{\mu}_{ia}-\mu_{ia}^*)\right]^2\right)\notag\\
\leq & 2n^{-1}|\mathcal{A}|\sum\nolimits_{i=1}^n\sum\nolimits_{a\in\mathcal{A}}\left(\mathbb{E}\left[\Delta_{g}^{\pi}(p_{ia})\right]^2 + \mathbb{E}\left[\pi_{ia}(\hat{\mu}_{ia}-\mu_{ia}^*)\right]^2\right)\notag\\
= & O(K^2n^{-q/(1+2q)}) +  O(K^2n^{-1}).
\end{align*}
Thus, the MSE expression in the theorem follows. 

\subsection*{More results on the experiments}
\textbf{Performance under the estimated logging policy. } 
We conduct an experiment where the logging policy $b$ is estimated. 
Unlike the previous experiment,  the logging policy is estimated via logistic regression using four-fifths of the test data (Table \ref{tab:data}), 
and policy value estimates are computed using various methods. 
We calculate the resulting bias and RMSE. 
We repeat this entire process 20 times, and report the average bias and RMSE values across these 20 runs in Table \ref{tab:estimate}. 

        \begin{table*}[!h]
        \caption{Bias and RMSE of various estimators for classification error under estimated logging policy.}
        \begin{center}
        \begin{tabular}{c | ccccc | ccccc}
        \hline
       \multirow{2}{*}{Data} & \multicolumn{5}{c|}{Bias} &  \multicolumn{5}{c}{RMSE}  \\ \cline{2-11}
 & DM &  IPW   & DR &  NW &  MNW & DM & IPW  & DR &  NW &  MNW  \\
        \hline
letter&0.503&0.006&\bf0.000&0.001&0.009&0.503&0.030&0.035&\bf0.029&0.034\\
glass&0.221&0.290&-0.317&-0.038&\bf0.022&0.221&0.566&0.638&\bf0.268&0.292\\
ecoli&0.197&0.115&-0.208&-0.009&\bf0.003&0.197&0.305&0.417&\bf0.112&0.128\\
opt&0.290&0.009&-0.042&\bf-0.001&-0.004&0.290&0.036&0.062&\bf0.023&0.030\\
page&0.090&0.009&0.001&\bf0.000&0.010&0.090&0.020&0.033&\bf0.011&0.017\\
pen&0.396&0.002&-0.004&\bf-0.001&0.003&0.396&0.018&0.025&\bf0.017&0.023\\
sat&0.193&0.005&0.019&\bf-0.001&0.022&0.193&0.021&0.027&\bf0.019&0.029\\
vehicle&0.239&0.009&\bf-0.002&-0.005&0.012&0.239&0.054&0.051&0.045&\bf0.044\\
yeast&0.215&0.015&0.065&\bf0.000&0.069&0.215&0.084&0.097&\bf0.074&0.095\\
         \hline
        \end{tabular}
        \end{center}
        \label{tab:estimate}
    \end{table*}

\textbf{Performance across evaluation sample sizes.} 
We examine performance across a range of evaluation sample sizes under the true logging policy.  
Figure \ref{page} reports results for the Page data as a representative example. 
We report the RMSE along with its standard error, computed over 100 repetitions and displayed as error bars, as well as the bias and its standard error. The results show that our approach consistently outperforms competing methods across sample sizes.
    \begin{figure}[!ht]
        \centering
        \includegraphics[width=0.48\textwidth]{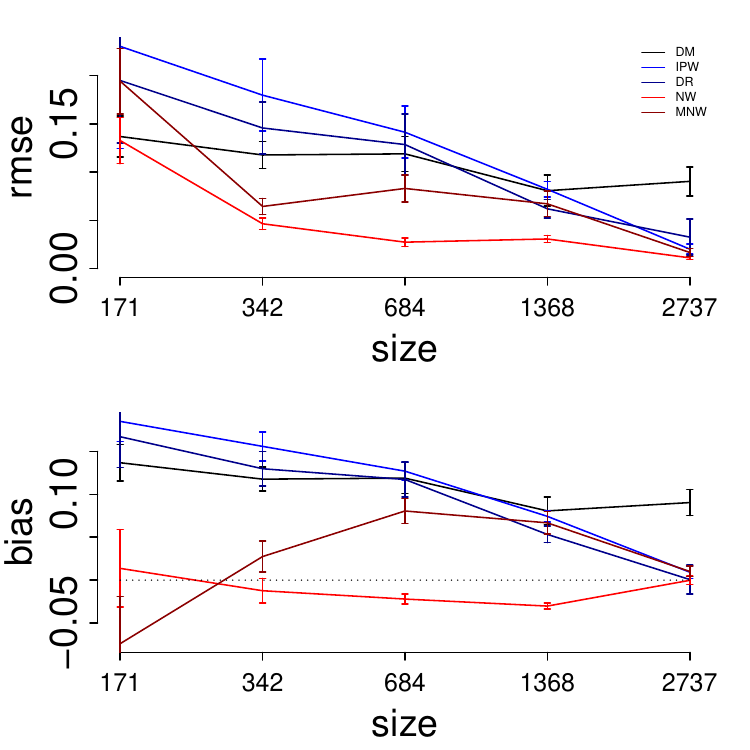}
        \includegraphics[width=0.48\textwidth]{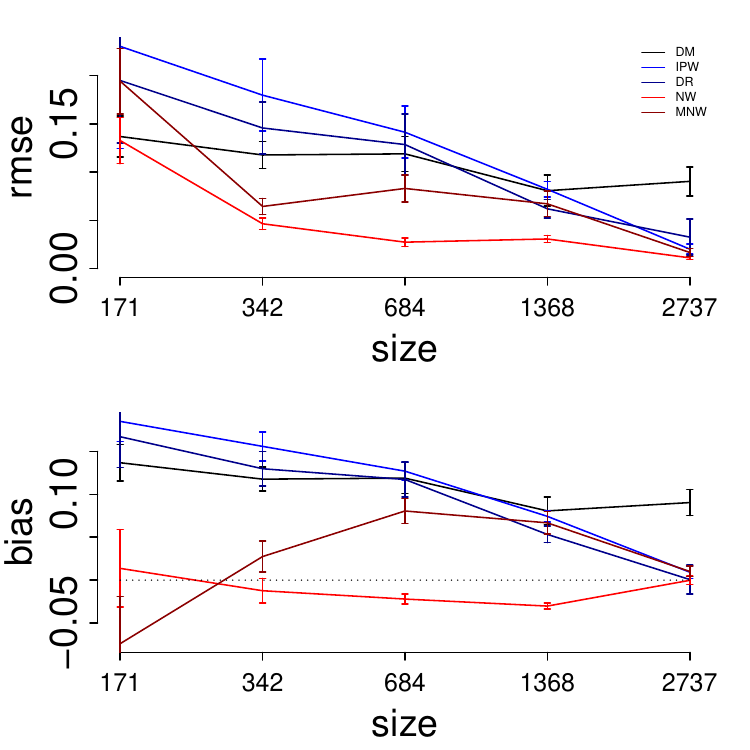}
        \caption{Performance across various sample size for the Page dataset.
        Left: RMSE as a function of sample size; Right: Bias as a function of sample size. }
        \label{page}
    \end{figure}

        \begin{table*}[!h]
        \caption{Summary of the datasets, where $n$ denotes the total sample size, $d$ is the dimension of covariates, $n_v$ is the sample size of the evaluation, and $n_p$ is the sample size used to estimate the training behavior policy.}
        \begin{center}
        \begin{tabular}{c | cccc}
        \hline
Data & $n$ &  $d$   & $n_{v}$  & $n_p$ \\
        \hline
letter&20000&16&10000&7500\\
glass&214&10&107&80\\
ecoli&336&8&168&126\\
opt&3823&64&1912&1434\\
page&5473&10&2737&2052\\
pen&7494&16&3747&2810\\
sat&6435&36&3218&2413\\
vehicle&846&18&423&317\\
yeast&1484&9&742&556\\
         \hline
        \end{tabular}
        \end{center}
        \label{tab:data}
    \end{table*}

\end{document}